\newtheorem{theorem}{Theorem}
\newtheorem{proposition}[theorem]{Proposition}
\title{Compressing Deep Convolutional Neural Networks by Stacking Low-dimensional Binary Convolution Filters}
\author {

        Weichao Lan,\textsuperscript{\rm 1}
        Liang Lan, \textsuperscript{\rm 1}
        \\
}
\begin{document}
\maketitle

\begin{abstract}
Deep Convolutional Neural Networks (CNN) have been successfully applied to many real-life problems. However, the huge memory cost of deep CNN models poses a great challenge of deploying them on memory-constrained devices (e.g., mobile phones). One popular way to reduce the memory cost of deep CNN model is to train binary CNN where the weights in convolution filters are either $1$ or $-1$ and therefore each weight can be efficiently stored using a single bit. However, the compression ratio of existing binary CNN models is upper bounded by $\sim32$. To address this limitation, we propose a novel method to compress deep CNN model by stacking low-dimensional binary convolution filters. Our proposed method approximates a standard convolution filter by selecting and stacking filters from a set of low-dimensional binary convolution filters. This set of low-dimensional binary convolution filters is shared across all filters for a given convolution layer. Therefore, our method will achieve much larger compression ratio than binary CNN models. In order to train our proposed model, we have theoretically shown that our proposed model is equivalent to select and stack intermediate feature maps generated by low-dimensional binary filters. Therefore,  our proposed model can be efficiently trained using the split-transform-merge strategy. We also provide detailed analysis of the memory and computation cost of our model in model inference. We compared the proposed method with other five popular model compression techniques on two benchmark datasets. Our experimental results have demonstrated that our proposed method achieves much higher compression ratio than existing methods while maintains comparable accuracy.
\end{abstract}

\section{Introduction} \label{intro}
Recent advances in deep convolutional neural network (CNN) have produced powerful models that achieve high accuracy on a wide variety of real-life tasks. 
These deep CNN models typically consist of a large number of convolution layers involving many parameters. They require large memory to store the model parameters and intensive computation for model inference. Due to concerns on privacy, security and latency caused by performing deep CNN model inference remotely in the cloud, deploying deep CNN models on edge devices (e.g., mobile phones) and performing local on-device model inference has gained growing interests recently \cite{zhang2018shufflenet, howard2017mobilenets}. However, the huge memory cost of deep CNN model poses a great challenge when deploying it on resource-constrained edge devices. For example, the VGG-16 network \cite{simonyan2014very}, which is one of the famous deep CNN models, performs very well in both image classification and object detection tasks. But this VGG-16 network requires more than 500MB memory and over 15 billions floating number operations (FLOPs) to classify a single input image \cite{cheng2018recent}. 

To reduce the memory and computation cost of deep CNN models, several model compression methods have been proposed in recent years. These methods can be generally categorized into five major types: (1) parameter pruning and sharing \cite{han2015learning, han2015deep, ullrich2017soft}: pruning redundant, non-informative weights in pre-trained CNN models; (2)
low-rank approximation \cite{denton2014exploiting, jaderberg2014speeding}: finding appropriate low-rank approximation for convolution layers; (3) knowledge distillation \cite{ba2014deep,hinton2015distilling,bucilua2006model}: approximating deep neural networks with shallow models; (4) compact convolution filters \cite{howard2017mobilenets, zhang2018shufflenet}: using carefully designed structural convolution filters; and (5) model quantization \cite{han2015deep, gupta2015deep}: quantizating the model parameters and therefore reducing the number of bits to represent each weight. Among these existing studies, model quantization is one of the most popular ways for deep CNN model compression. It is widely used in commercial model deployments and has several advantages compared with other methods \cite{krishnamoorthi2018quantizing}: (1) broadly applicable across different network architectures and hardwares; (2) smaller model footprint; (3) faster computation and (4) powerful efficiency.  

Binary neural networks is the extreme case in model quantization where each weight can only be $1$ or $-1$ and therefore can be stored using a single bit. In the research direction of binary neural networks, the pioneering work BinaryConnect (BC) proposed by \cite{courbariaux2015binaryconnect} is the first successful method that incorporates learning binary model weights in the training process. Several extensions to BC have been proposed, such as Binarized Neural Networks(BNN) presented by \cite{hubara2016binarized}, Binary Weight Network (BWN) and XNOR-Networks (XNOR-Net) proposed by \cite{rastegari2016xnor}. Even though the existing works on binary neural networks have shown promising results on model compression and acceleration, they use a binary filter with the same kernel size and the same filter depth as a standard convolution filter. Therefore, for a given popular CNN architecture, binary neural networks can only compress the original model by up to $\sim 32$ times. This upper bound on compression ratio (i.e., 32) could limit the applications of binary CNNs on resource-constrained devices, especially for large scale CNNs with a huge number of parameters.

Motivated by recent work LegoNet \cite{yang2019legonet} which constructs efficient convolutional networks with a set of small full-precision convolution filters named lego filters, we propose to compress deep CNN by selecting and stacking low-dimensional binary convolution filters. In our proposed method, each original convolution filter is approximated by stacking a number of filters selected from a set of low-dimensional binary convolution filters. This set of low-dimensional binary convolution filters is shared across all convolution filters for a given convolution layer. Therefore, our proposed method can achieve much higher compression ratio than binary CNNs. Compared with LegoNet, our proposed method can reduce the memory cost of LegoNet by a factor of $\sim 32$ since our basic building blocks are binary filters instead of full-precision filters. 

The main contributions of this paper can be summarized as follows: First, we propose a novel method to overcome the theoretical compression ratio limit of recent works on binary CNN models. Second, we have shown that our proposed model can be reformulated as selecting and stacking feature maps generated by low-dimensional binary convolution filters. After reformulation, our proposed model can be efficiently trained using the split-transform-merge strategy and can be easily implemented by using any existing deep learning framework (e.g., PyTorch or Tensorflow). Third, we provide detailed analysis of the memory and computation cost of our model for model inference. Finally, we compare our proposed method with other five popular model compression algorithms on three benchmark datasets. Our experimental results clearly demonstrate that our  proposed  method can achieve comparable accuracy with much higher compression ratio. We also empirically explore the impact of various training techniques (e.g., choice of optimizer, batch normalization) on our proposed method in the experiments.

\section{Preliminaries}\label{sec:prelim}
\textbf{Convolutional Neural Networks.} In a standard CNN, convolution operation is the basic operation. As shown in Fig. \ref{Fig.1.1}, for a given convolution layer in CNN, it transforms a three-dimensional input tensor $\mathbf{X}_{input} \in \mathbb{R}^{w_{in} \times h_{in} \times c_{in}}$, where $w_{in}$, $h_{in}$ and $c_{in}$ represents the width, height and depth (or called number of channels) of the input tensor, into a three-dimensional output tensor $\mathbf{X}_{output} \in \mathbb{R}^{w_{out} \times h_{out} \times c_{out}}$ by

\begin{equation}\label{eq:conv}
\mathbf{X}_{output} = \text{Conv}(\mathbf{X}_{input}, \mathbf{W}),
\end{equation}
where Conv() denotes the convolution operation. Each entry in the output tensor $\mathbf{X}_{output}$ is obtained by an element-wise multiplication between a convolution filter $\mathbf{W}^{i} \in \mathbb{R}^{d \times d \times c_{in}} $ and a patch $\mathbf{X}_{input}^{i} \in \mathbb{R}^{d \times d \times c_{in}}$ extracted from $\mathbf{X}_{input}$ followed by summation. $d \times d$ is the kernel size of the convolution filter (usually $d$ is 3) and $c_{in}$ is depth of the convolution filter which is equal to the number of input channels. Therefore, for a given convolution layer with $c_{out}$ convolution filters, we can use $\mathbf{W} \in \mathbb{R}^{d \times d \times c_{in} \times c_{out}}$ to denote the parameters needed for all $c_{out}$ convolution filters. The memory cost of storing weights of convolution filters $\mathbf{W}$ for a given layer is $ d \times d \times c_{in} \times c_{out}\times 32$ bits assuming 32-bit floating-point values are used to represent model weights. It is high since deep CNN models usually contain a large number of layers. The computation cost for CNN model inference is also high because the convolution operation involves a large number of FLOPs.

\noindent\textbf{Binary Convolutional Neural networks}. To reduce the memory and computation cost of deep CNN model, several algorithms \cite{simonyan2014very, courbariaux2015binaryconnect, rastegari2016xnor, hubara2016binarized, AlizadehFLG19an} have been proposed recently. Their core idea is to binarize the model weights $\mathbf{W} \in \mathbb{R}^{d \times d \times c_{in} \times c_{out}}$. Since a binary weight can be efficiently stored with a single bit, these methods can reduce the memory cost of storing $\mathbf{W} \in \mathbb{R}^{d \times d \times c_{in} \times c_{out}}$ to $d \times d \times c_{in} \times c_{out}$ bits. It has been shown that these methods can achieve good classification accuracy with much less memory and computation cost compared to standard CNN model. However, due to that the binarized $\mathbf{W}$ is still of size $d \times d \times c_{in} \times c_{out}$, these binary CNNs can only reduce the memory cost of deep CNN model by up to $\sim 32$ times.


\begin{figure}[htb]
\centering
\subfigure[convolution filters $\mathbf{W}$]{\centering \label{Fig.1.1}\includegraphics[width=1.0\columnwidth]{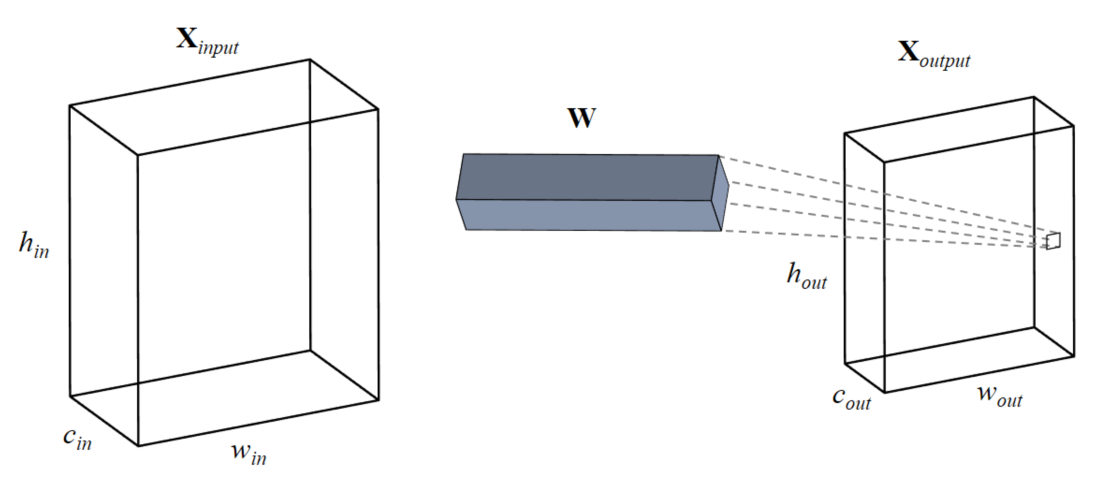}}
\subfigure[stacked $\mathbf{W}$ using low-dimensional binary convolution filters]{\centering \label{Fig.1.2}\includegraphics[width=1.0\columnwidth]{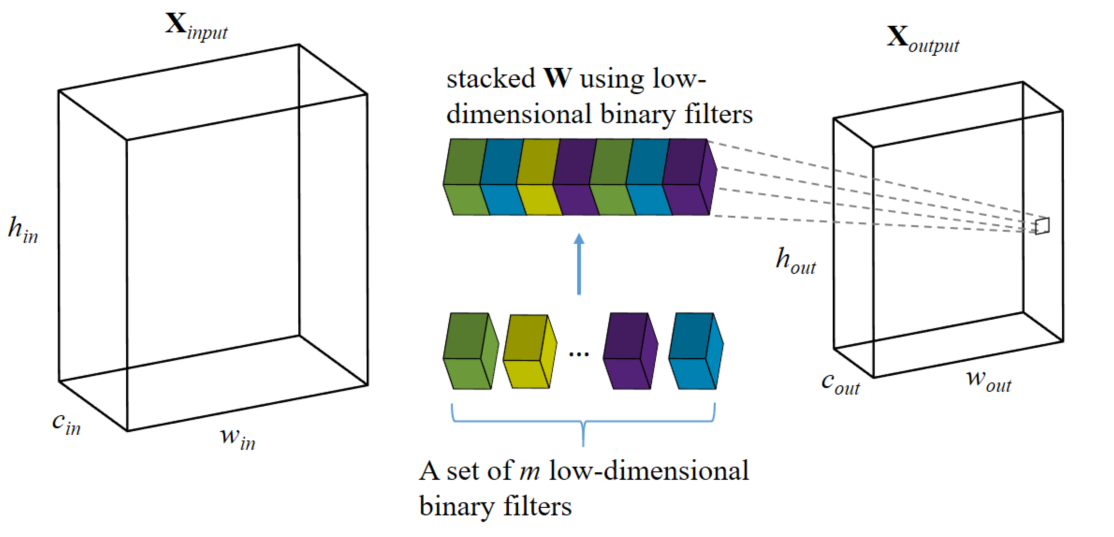}}
\caption{Approximating Convolution Filters by Stacking Low-dimensional Binary convolution Filters}
\label{Fig1}
\end{figure}

\section{Methodology}

In this section, we propose a new method that can overcome the theoretical compression ratio limit of binary CNN models. Instead of approximating convolution filters using binary convolution filters with the same kernel size and the same filter depth, our proposed idea approximates the convolution filters by selecting and stacking a number of filters from a set of low-dimensional binary convolution filters. The depth of these binary filters will be much smaller than the depth of original convolution filters. Therefore, we call them \textit{low-dimensional binary convolution filters} in this paper. This set of low-dimensional binary convolution filters is shared across all convolution filters for a given convolution layer. The main idea of our proposed method is illustrated in Fig. \ref{Fig1} and we will explain the details of it in following subsections.

\subsection{Approximating convolution Filters by Stacking low-dimensional Binary Filters}\label{sec:modelformulation}
Suppose we use $\mathbf{W}^{t} \in \mathbb{R}^{d \times d \times c_{in}}$ to denote the $t$-th full-precision convolution filter in a convolution layer in a standard CNN. According to (\ref{eq:conv}), the $t$-th feature map in the output tensor $\mathbf{X}^t_{output} \in \mathbb{R}^{w_{out} \times h_{out}}$ generated by convolution filter $\mathbf{W}^{t}$ can be written as
\begin{equation}\label{eq:conv_l}
\mathbf{X}_{output}^{t} = \text{Conv}(\mathbf{X}_{input}, \mathbf{W}^t).
\end{equation}

Let us use $\{\mathbf{B}_1, \mathbf{B}_2, \dots, \mathbf{B}_m\}$ to denote a set of $m$ shared binary convolution filters for a given convolution layer. $\mathbf{B}_i \in \mathbb{R}^{d \times d \times s}$ denotes the weights for the $i$-th binary filter where $s$ is depth of the binary convolution filters. In here, the depth $s$ is much smaller than $c_{in}$ which is the depth of original convolution filters. Each element in $\mathbf{B}_i$ is either $1$ or $-1$. We propose to approximate $\mathbf{W}^{t}$ by selecting $k = \frac{c_{in}}{s}$ low-dimensional binary convolution filters from $\{\mathbf{B}_1,\dots, \mathbf{B}_m\}$ and then stacking them together. Let us define an indicator matrix $\mathbf{P} \in \mathbb{R}^{m \times k}$ as
\begin{equation}\label{eq:matrixP}
\mathbf{P}_{ji} =
\begin{cases}
   \ 1 \ \ \ \ \text{if the $i$-th block of $\mathbf{W}^{t}$ is $\mathbf{B}_j$  } \\
   \ 0 \ \ \ \ \text{otherwise}\\
  \end{cases}.
\end{equation}
By following the selecting and stacking idea, $\mathbf{W}^{t}$ will be approximated by [$\sum_{j=1}^{m}\mathbf{P}_{j1}\mathbf{B}{j}, \sum_{j=1}^{m}\mathbf{P}_{j2}\mathbf{B}{j},\dots, \sum_{j=1}^{m}\mathbf{P}_{jk}\mathbf{B}{j}$] which concatenates $k$ low dimensional binary convolution filters together in column-wise manner. Here we can also introduce another variable $\alpha_i$ to denote the scaling factor associated to the $i$-th block of $\mathbf{W}^{t}$ when we concatenate different binary convolution filters together, that is, $\mathbf{W}^{t} \approx$ [$\alpha_1\sum_{j=1}^{m}\mathbf{P}_{j1}\mathbf{B}{j}, \alpha_2\sum_{j=1}^{m}\mathbf{P}_{j2}\mathbf{B}{j}, \dots,$ $\alpha_k\sum_{j=1}^{m}\mathbf{P}_{jk}\mathbf{B}{j}$]. We can treat $\alpha_i\mathbf{P}_{ji}$ as single variable by changing the definition of $\mathbf{P}$ in (\ref{eq:matrixP}) to
\begin{equation}\label{eq:matrixP_2}
\mathbf{P}_{ji} =
\begin{cases}
   \ \alpha_i \ \ \ \ \text{if the $i$-th block of $\mathbf{W}^{t}$ is $\mathbf{B}_j$  } \\
   \ 0 \ \ \ \ \ \text{otherwise}\\
  \end{cases}.
\end{equation}
In our experiment section, we have shown that introducing the scaling factors $\{\alpha_i\}_{i=1}^{k}$ always obtains slightly better classiﬁcation accuracy than without using them.

Let us split the $\mathbf{X}_{input} \in \mathbb{R}^{w_{in} \times h_{in} \times c_{in}}$ into $k = \frac{c_{in}}{s}$ parts \{$\mathbf{X}_{input(1)}$, $\mathbf{X}_{input(2)}$, \dots, $\mathbf{X}_{input(k)}$ \} where the size of each part $\mathbf{X}_{input(i)}$ is $\mathbb{R}^{w_{in} \times h_{in} \times s}$. Then the $t$-th feature map in the output tensor generated by convolution filter $\mathbf{W}^{t}$ as shown in (\ref{eq:conv_l}) can be approximated as
\begin{equation}\label{eq:conv_l_approximation}
\mathbf{X}_{output}^{t} = \sum_{i = 1}^{k}\text{Conv}(\mathbf{X}_{input(i)}, \sum_
{j=1}^{m}\mathbf{P}_{ji}\mathbf{B}{j}).
\end{equation}

Note that $\|\mathbf{P}_{(:,i)}\|_0 = 1$ (i.e., each column of $\mathbf{P}$ only contains one non-zero value) means that only one binary filter $\mathbf{B}_j$ is selected to perform the convolution operation on the $i$-th part of $\mathbf{X}_{input}$. The $\mathbf{X}_{output}^{t}$ is a element-wise sum of $k$ feature maps. Each feature map is generated by applying a single low-dimensional binary convolution filter to one part of the input.

As shown in (\ref{eq:conv_l_approximation}), for a convolution filter in a given convolution layer, the model parameters are $\{\mathbf{B}_1,\dots, \mathbf{B}_m\}$ and $\mathbf{P}$, where $\{\mathbf{B}_1,\dots, \mathbf{B}_m\}$ is shared by all convolution filters for a given convolution layer. Therefore, the model parameters of our proposed method for a given convolution layer with $c_{out}$ convolution filters are just $\{\mathbf{B}_1,\dots, \mathbf{B}_m\}$ and $\{\mathbf{P}^{t}\}_{t=1}^{c_{out}}$. By considering that the memory cost of storing $\{\mathbf{P}^{t}\}_{t=1}^{c_{out}}$ is relatively small than storing $\{\mathbf{B}_1,\dots, \mathbf{B}_m\}$, our proposed method can significantly reduce the memory cost of binary CNNs. A detailed analysis of the compression ratio and computation cost of our proposed method will be provided in section \ref{sec:algorithmAnalysis}.

\subsection{Training Model Parameters of the Proposed Compressed CNN}\label{sec:modelTraining}
In this section, we present our algorithm to learn the model parameters $\{\mathbf{B}_1,\dots, \mathbf{B}_m\}$ and $\{\mathbf{P}^t\}_{t=1}^{c_{out}}$ from the training data. Without loss of generality, let us consider to optimize the model parameters for one layer. Assume $\{\mathbf{X}_{input}, \mathbf{Y}_{output}\}$ is a mini-batch of inputs and targets for a given convolution layer. Therefore, the objective for optimizing $\{\mathbf{B}_1,\dots, \mathbf{B}_m\}$ and $\{\mathbf{P}^t\}_{t=1}^{c_{out}}$ will be
\begin{equation}\label{eq:objective}
\begin{split}
\min  & \sum_{t=1}^{c_{out}}\|\mathbf{Y}_{output}^{t} -  \sum_{i=1}^{k}\text{Conv}(\mathbf{X}_{input(i)}, \sum_{j=1}^{m}\mathbf{P}_{ji}^{t}\mathbf{B}_j)\|^2  \\
	s.t & \ \|\mathbf{P}_{(:,i)}^{t}\|_0 = 1 \\
        & \ \mathbf{B}_{ij} \in \{-1, 1\}.
\end{split}
\end{equation}

In order to optimize (\ref{eq:objective}), we first prove that the convolution operation $\text{Conv}(\mathbf{X}_{input(i)}, \sum_{j=1}^{m}\mathbf{P}_{ji}^{t}\mathbf{B}_j)$ is equivalent to $\sum_{j=1}^{m}\mathbf{P}_{ji}^{t}\text{Conv}(\mathbf{X}_{input(i)}, \mathbf{B}_j)$ as shown in Proposition \ref{proposition:equivalance}. In other words, selecting a convolution filter (i.e., $\sum_{j=1}^{m}\mathbf{P}_{ji}^{t}\mathbf{B}_j$) and then performing convolution operation is equivalent to performing $m$ convolution operations and then selecting a feature map from the generated $m$ intermediate feature maps $\{\text{Conv}(\mathbf{X}_{input(i)}, \mathbf{B}_j)\}_{j=1}^{m}$. The advantage of latter computation is that it can reduce the computation cost since these $m$ intermediate feature maps $\{\text{Conv}(\mathbf{X}_{input(i)}, \mathbf{B}_j)\}_{j=1}^{m}$ is shared across all $c_{out}$ convolution filters for a given convolution layer.

\begin{proposition}\label{proposition:equivalance}
Suppose $\mathbf{X}_{input(i)} \in \mathbb{R}^{w_{in} \times h_{in} \times s}$, $\{\mathbf{B}_1,\dots, \mathbf{B}_m\}$ is a set of $m$ low-dimensional binary filters where each $\mathbf{B}_i \in \mathbb{R}^{d \times d \times s}$ and $\mathbf{P}_{(:,i)}^{t}$ is the $i$-th column in $\mathbf{P}^t$ which is a length-$m$ sparse vector with only one non-zero element. Then, $\emph{Conv}(\mathbf{X}_{input(i)}, \sum_{j=1}^{m}\mathbf{P}_{ji}^{t}\mathbf{B}_j)$ is equivalent to $\sum_{j=1}^{m}\mathbf{P}_{ji}^{t}\emph{Conv}(\mathbf{X}_{input(i)}, \mathbf{B}_j)$.
\end{proposition}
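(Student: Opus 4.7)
The plan is to reduce the proposition to the elementary fact that the convolution operation, viewed as a function of its filter argument with the input fixed, is linear. Once that linearity is established, the claim follows immediately because the filter $\sum_{j=1}^m \mathbf{P}_{ji}^t \mathbf{B}_j$ is a linear combination (with scalar coefficients $\mathbf{P}_{ji}^t$) of the filters $\mathbf{B}_j$, all of which share the same shape $\mathbb{R}^{d \times d \times s}$ as required for the sum to even be well-defined.

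First, I would unpack the Conv operation using the entry-wise definition given in Section 2: each output entry is an inner product between a filter and a corresponding patch extracted from the input tensor. Concretely, for any output position $(p,q)$ the value $\text{Conv}(\mathbf{X}_{input(i)}, \mathbf{B})_{p,q}$ is a finite sum of products of the form $\mathbf{X}_{input(i)}[p+a, q+b, c]\cdot \mathbf{B}[a,b,c]$ over the $d \times d \times s$ filter coordinates. Written this way, it is transparent that Conv depends linearly on $\mathbf{B}$: each entry of $\mathbf{B}$ contributes only through scalar multiplication with a fixed input value. Substituting $\mathbf{B} = \sum_{j=1}^m \mathbf{P}_{ji}^t \mathbf{B}_j$ and swapping the two finite summations (over the filter coordinates $(a,b,c)$ and over the index $j$), which is justified by distributivity and commutativity of real arithmetic, yields exactly the $(p,q)$-entry of $\sum_{j=1}^m \mathbf{P}_{ji}^t \text{Conv}(\mathbf{X}_{input(i)}, \mathbf{B}_j)$. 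Since equality of tensors reduces to entry-wise equality at every output position $(p,q)$, the proof concludes.

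The hardest part of carrying this out is really just notational bookkeeping — keeping the index ranges for patches, strides, and channel depth consistent with the convention in Section 2 — rather than anything mathematically delicate. In particular, I expect to emphasize that the hypothesis $\|\mathbf{P}_{(:,i)}^t\|_0 = 1$ plays no role in the equivalence itself; the identity would hold equally well for arbitrary coefficients. That sparsity assumption is only needed later, where it ensures that the right-hand side can be interpreted as \emph{selecting} a single pre-computed intermediate feature map per input part, which is precisely the computational speedup the authors highlight immediately after the proposition.
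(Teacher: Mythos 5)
Your proposal is correct and rests on the same key fact as the paper's own proof: linearity of the convolution in its filter argument for a fixed input. The paper packages this as the associative property of matrix multiplication after vectorizing patches and filters into matrices $\widetilde{\mathbf{X}}$ and $\widetilde{\mathbf{B}}$ (so that $\widetilde{\mathbf{X}}^{T}(\widetilde{\mathbf{B}}\mathbf{P}_{(:,i)}^{t}) = (\widetilde{\mathbf{X}}^{T}\widetilde{\mathbf{B}})\mathbf{P}_{(:,i)}^{t}$), which is just a notational repackaging of your entry-wise sum-swapping argument; your observation that the sparsity of $\mathbf{P}_{(:,i)}^{t}$ is irrelevant to the identity itself is also consistent with the paper's proof.
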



\begin{figure}[htbp]
\centering
\includegraphics[width=0.9\columnwidth]{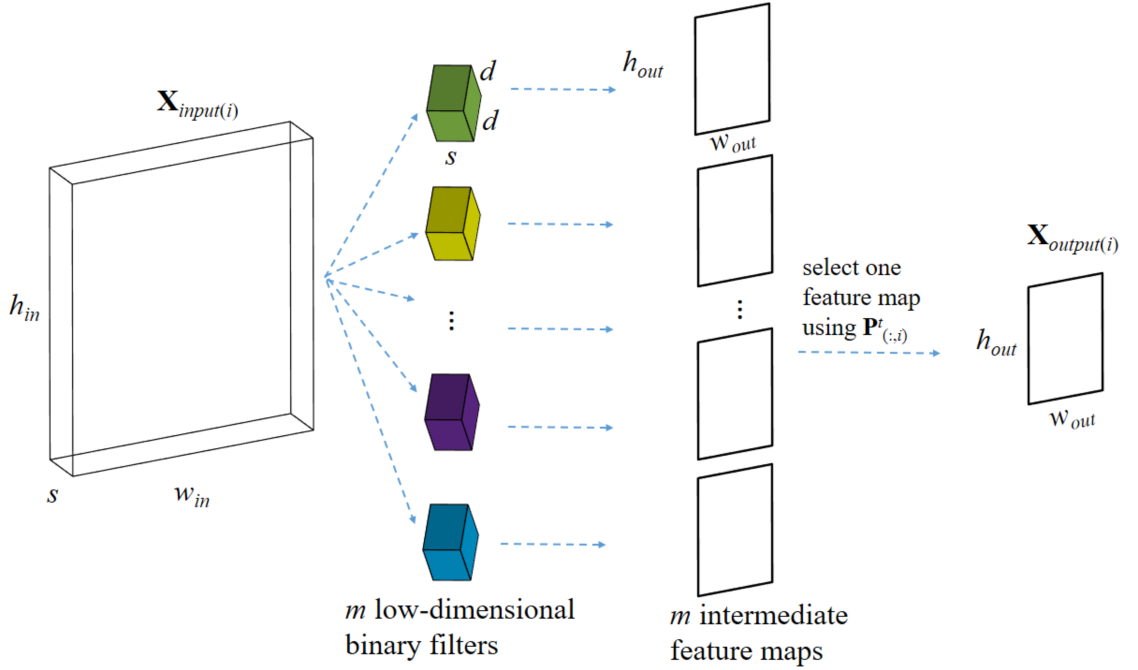}
\caption{Reformat as convolution and then select feature}
\label{fig2}
\end{figure}

The proof of Proposition \ref{proposition:equivalance} can be done by using the definition of convolution operation and the associative property of matrix multiplication. The details can be found in supplementary materials. Based on Proposition \ref{proposition:equivalance}, \(\text{Conv}(\mathbf{X}_{input(i)}, \sum_{j=1}^{m}\mathbf{P}_{ji}^{t}\mathbf{B}_j)\) in (\ref{eq:objective}) can be reformulated as: (1) first performing convolution operations on $\mathbf{X}_{input(i)}$ using $\{\mathbf{B}_1,\dots, \mathbf{B}_m\}$ to generate $m$ intermediate feature maps; (2) selecting one feature map from them. This procedure is also illustrated in Figure \ref{fig2}. After reformulation, our proposed model can be efficiently trained using the split-transform-merge strategy as in \citet{szegedy2015going}.

Similar to training a standard CNN, the training process of our proposed model involves three steps in each iteration: (1) forward propagation; (2) backward propagation and (3) parameter update. In our proposed model, we have additional non-smooth constraints on $\{\mathbf{B}_1,\dots, \mathbf{B}_m\}$ and $\{\mathbf{P}^{t}\}_{t=1}^{c_{out}}$. To effectively learning the non-smooth model parameters in each convolution layer, we introduce full-precision filters $\{\mathbf{R}_1,\dots, \mathbf{R}_m\}$ as the proxies of binary filters $\{\mathbf{B}_1,\dots, \mathbf{B}_m\}$ and dense matrices $\{\mathbf{Q}^{t}\}_{t=1}^{c_{out}}$ as the proxies of$\{\mathbf{P}^{t}\}_{t=1}^{c_{out}}$. Instead of directly learning $\{\mathbf{B}_1,\dots, \mathbf{B}_m\}$ and $\{\mathbf{P}^{t}\}_{t=1}^{c_{out}}$, we learn the proxies $\{\mathbf{R}_1,\dots, \mathbf{R}_m\}$ and $\{\mathbf{Q}^{t}\}_{t=1}^{c_{out}}$ during the training. $\{\mathbf{B}_1,\dots, \mathbf{B}_m\}$ and $\{\mathbf{P}^{t}\}_{t=1}^{c_{out}}$ are computed only in the forward propagation and backward propagation. This framework has been successfully used in training binary neural networks \cite{courbariaux2015binaryconnect,hubara2016binarized,rastegari2016xnor}.

\textbf{Forward Propagation.} During the forward propagation, the binary convolution filters $\{\mathbf{B}_1,\dots, \mathbf{B}_m\}$ is obtained by
\begin{equation}\label{eq:B_sign}
\mathbf{B}_i = \text{sign}({\mathbf{R}}_i),
\end{equation}
where sign() is the element-wise sign function which return 1 if the element is larger or equal than zero and return $-1$ otherwise. Similarly, sparse indicator matrices $\{\mathbf{P}^{t}\}_{t=1}^{c_{out}}$ can be obtained by
\begin{equation}\label{eq:P_forwardpass}
\mathbf{P}^t_{ji} =
\begin{cases}
   \ \mathbf{Q}^t_{ji} \ \ \ \ \text{if $j = \text{argmax}\ (|\mathbf{Q}^t_{(:,i)}|)$} \\
   \ 0 \ \ \ \ \ \ \ \ \ \ \text{otherwise}\\
  \end{cases}
\end{equation}
during the forward propagation where the $\text{argmax}\ (|\mathbf{Q}^t_{(:,i)}|)$ function returns the row index $j$ of the maximum absolute value of the $i$-th column of $\mathbf{Q}^t$.

\textbf{Backward Propagation.} Since both the sign() function in (\ref{eq:B_sign}) and the argmax() function in (\ref{eq:P_forwardpass}) are not differentiable, we use the Straight Through Estimator (STE) \citep{bengio2013estimating} to back propagate the estimated gradients for updating the proxy variables $\{\mathbf{R}_1,\dots, \mathbf{R}_m\}$ and $\{\mathbf{Q}^{t}\}_{t=1}^{c_{out}}$. The basic idea of STE is to simply pass the gradients as if the non-differentiable functions sign() and argmax() are not present.

Specifically, let us use $r$ to denote a full-precision weight and it is a proxy for a binary weight $b$. Therefore,
\begin{equation}\label{eq:b_binary}
b =
\begin{cases}
   \ 1 \ \ \ \ \text{if $r \ge 0$,} \\
   \ -1 \ \ \ \ \text{otherwise.}\\
  \end{cases}
\end{equation}
(\ref{eq:b_binary}) is not a differentiable function, STE will just simply estimate its gradient as sign function is not present. That is $\frac{\partial b}{\partial r} = 1$. In practice, we also employ the gradient clipping as in \citet{hubara2016binarized}. Then, the gradient for the sign function is
\begin{equation}\label{eq:gradientClip}
   \frac{\partial b}{\partial r} = 1_{|r|\le 1}.
\end{equation}

Therefore, in the back propagation, the gradient of a convex loss function $L(r)$ with respect to the proxy variable $r$ can be estimated as
\begin{equation}\label{eq:proxy_gradient}
\frac{\partial L(r)}{\partial r} = \frac{\partial L(b)}{\partial b}\frac{\partial b}{\partial r} = \frac{\partial L(b)}{\partial b}1_{|r|\le 1}.
\end{equation}

Similarly, the gradient of a convex loss function $L(\mathbf{Q}^t_{ji})$ with respect to the proxy variable $\mathbf{Q}^t_{ji}$ can be estimated by STE as
\begin{equation}\label{eq:proxy_gradient_Q}
\frac{\partial L(\mathbf{Q}^t_{ji})}{\partial \mathbf{Q}^t_{ji}} = \frac{\partial L(\mathbf{P}^t_{ji})}{\partial \mathbf{P}^t_{ji}}\frac{\partial \mathbf{P}^t_{ji}}{\partial \mathbf{Q}^t_{ji}} = \frac{\partial L(\mathbf{P}^t_{ji})}{\partial \mathbf{P}^t_{ji}}.
\end{equation}

\textbf{Parameter Update.} As shown in (\ref{eq:proxy_gradient}) and (\ref{eq:proxy_gradient_Q}), we now can backpropagate gradients $\frac{\partial L(b)}{\partial b}$ and $\frac{\partial L(\mathbf{P}^t_{ji})}{\partial \mathbf{P}^t_{ji}}$ to their proxies $\{\mathbf{R}_1,\dots, \mathbf{R}_m\}$ and $\{\mathbf{Q}^{t}\}_{t=1}^{c_{out}}$. Then, these two proxy variables can be updated by using any popular optimizer (e.g., SGD with momentum or ADAM \citep{kingma2014adam}). Note that once our training process is completed, we do not need to keep the proxy variables $\{\mathbf{R}_1,\dots, \mathbf{R}_m\}$ and $\{\mathbf{Q}^{t}\}_{t=1}^{c_{out}}$. Only the low-dimensional binary convolution filters$\{\mathbf{B}_1,\dots, \mathbf{B}_m\}$ and the sparse indicator matrices $\{\mathbf{P}^{t}\}_{t=1}^{c_{out}}$ are needed for convolution operations in model inference.

\begin{algorithm}[tb]
\caption{Compressed CNN via stacking low-dimensional binary filters}
\begin{algorithmic}
\STATE \underline{\textbf{Training}}
\STATE \textbf{Input}: training data $\{\mathbf{X}_{train}, \mathbf{y}_{train}\}$, a convex loss function $L(\mathbf{y}, \hat{\mathbf{y}})$, CNN configuration, hyperparameter for low-dimensional binary filter $s$ and $m$
\STATE \textbf{Output}: Compressed CNN model
\end{algorithmic}
\begin{algorithmic}[1]
    \STATE Initialize proxy variables $\{\mathbf{R}_1,\dots, \mathbf{R}_m\}$ and $\{\mathbf{Q}^t\}_{t=1}^{c_{out}}$ for each convolution layer $l$ based on CNN configuration and $s$ and $m$
	\FOR{iter = 1 to maxIter}
	    \STATE Get a minibatch of training data $\{\mathbf{X}, \mathbf{y}\}$
	    \FOR{$l = 1$ to $L$}
	    \STATE Obtain low-dimensional binary filters $\{\mathbf{B}_1,\dots, \mathbf{B}_m\}$ according to (\ref{eq:B_sign})
	    \STATE Obtain $\{\mathbf{P}^t\}_{t=1}^{c_{out}}$ for each convolution filter $t$ according to (\ref{eq:P_forwardpass})
	    \ENDFOR
	\STATE Perform standard \textbf{forward propagation} except that convolution operations are defined in Proposition \ref{proposition:equivalance}
	\STATE Compute the loss $L(\mathbf{y}, \hat{\mathbf{y}})$
	\STATE Perform standard \textbf{backward propagation} except that gradients for $\{\mathbf{R}_1,\dots, \mathbf{R}_m\}$ and  $\{\mathbf{Q}^t\}_{t=1}^{c_{out}}$ are computed respectively as in (\ref{eq:proxy_gradient}) and (\ref{eq:proxy_gradient_Q})
	\STATE Perform \textbf{parameter update} for proxy variables $\{\mathbf{R}_1,\dots, \mathbf{R}_m\}$ and  $\{\mathbf{Q}^t\}_{t=1}^{c_{out}}$ using any popular optimizer (e.g., SGD with momentum or ADAM)
    \ENDFOR
\end{algorithmic}\label{alg:SSBF}
\begin{algorithmic}
    \STATE \underline{\bf{Prediction}}
    \STATE \textbf{Input}: test data $\mathbf{X}_{test}$, Trained compressed CNN
	\STATE \textbf{Output}: predicted labels $\hat{\mathbf{y}}_{test}$;
	\end{algorithmic}
	\begin{algorithmic}[1]
	\STATE Perform standard \textbf{forward propagation} except that convolution operations are defined in Proposition \ref{proposition:equivalance} 
	\end{algorithmic}

\end{algorithm}

\subsection{Algorithm Implementation and Analysis}\label{sec:algorithmAnalysis}
We summarize our algorithm in \textbf{Algorithm 1}. In step 1, we initialize the proxy variables $\{\mathbf{R}_1,\dots, \mathbf{R}_m\}$ and $\{\mathbf{Q}^{t}\}_{t=1}^{c_{out}}$ for each convolution layer $l$. From step 4 to step 7, we obtain binary filters $\{\mathbf{B}_1,\dots, \mathbf{B}_m\}$ by (\ref{eq:B_sign}) and sparse indicator matrices $\{\mathbf{P}^t\}_{t=1}^{c_{out}}$ by (\ref{eq:P_forwardpass}) for each convolution layer. In step 8, we perform standard \textbf{forward propagation} except that convolution operations are defined as stacking low-dimensional binary filters. In step 9, we compute the loss $L\{\mathbf{y}, \hat{\mathbf{y}}\}$ using current predicted value $\hat{\mathbf{y}}$ and ground truth $\mathbf{y}$. In step 10, we perform standard \textbf{backward propagation} except that the gradients with respect to proxy variables  $\{\mathbf{R}_1,\dots, \mathbf{R}_m\}$ and  $\{{\mathbf{Q}^t}\}_{t=1}^{c_{out}}$ are computed respectively as in (\ref{eq:proxy_gradient}) and (\ref{eq:proxy_gradient_Q}). In step 11, we perform \textbf{parameter update} for proxy variables using any popular optimizer (e.g., SGD with momentum or ADAM). We implement our \textbf{Algorithm 1} using PyTorch framework \citep{paszke2019pytorch}.

In model inference, we do not need to keep the proxy variables. In each convolution layer, we only need the trained low-dimensional binary filters $\{\mathbf{B}_1,\dots, \mathbf{B}_m\}$ and sparse indicator matrices $\{\mathbf{P}^t\}_{t=1}^{c_{out}}$ to perform convolution operations. Therefore, compared with standard convolution operations using $\mathbf{W}$ as in (\ref{eq:conv}), our proposed method that constructs convolution filter by stacking a number of low-dimensional binary filters can significantly reduce the memory and computation cost of standard CNNs.

With respect to memory cost, for a standard convolution layer, the memory cost is $d \times d \times c_{in} \times c_{out} \times 32$ bits. In our proposed method, the memory cost of storing a set of low-dimensional binary filters $\{\mathbf{B}_1,\dots, \mathbf{B}_m\}$ is $d \times d \times s \times m$ bits. The memory cost of storing stacking parameter $\{\mathbf{P}^t\}_{t=1}^{c_{out}}$ is $\frac{c_{in}}{s}\times m \times c_{out}$ if $\mathbf{P}$ is defined as in (\ref{eq:matrixP}) where each entry can be stored using a single bit and is $\frac{c_{in}}{s}\times c_{out}\times 32 \times 3$ if $\mathbf{P}$ is defined in (\ref{eq:matrixP_2}) \footnote{We use three full-precision vectors to store the indices and values of the nonzero elements in sparse matrix $\mathbf{P}$.}. In our hyperparameter setting, we will set $s = c_{in}f_1$ and $m = c_{out}f_2$ where $f_1$ and $f_2$ are fractional numbers less than 1. In our experiments, we set them as $\frac{1}{2}, \frac{1}{4}, \frac{1}{8}, \dots$, and so on. 
The compression ratio of our proposed method is
\begin{equation}
    \frac{d \times d \times c_{in} \times c_{out} \times 32}{d \times d \times c_{in}f_1 \times c_{out}f_2+\frac{1}{f_1} \times c_{out} \times 32 \times 3} 
\end{equation}
By considering that the memory cost of storing $\{\mathbf{P}^t\}_{t=1}^{c_{out}}$ is relatively small compared with the memory cost of storing low-dimensional binary filters if $f_1$ is not a very small fractional number, the compression ratio of our proposed method can be approximated by $\sim \frac{32}{f_{1}f_{2}}$. The actual compression ratio of our method will be reported in the experimental section.  

With respect to computation cost, for a given convolution layer, standard convolution operations require $d \times d \times c_{in} \times w_{out} \times h_{out} \times c_{out}$ FLOPs. In comparison, our method will first require $d \times d \times c_{in} \times w_{out} \times h_{out} \times m$ FLOPs to compute $
\frac{1}{f_1} \times m$ intermediate feature maps where the depth of each intermediate feature map is equal to 1. Then, we select and combine these intermediate feature maps to form the output tensor using $w_{out} \times h_{out} \times \frac{1}{f_1} \times c_{out}$ FLOPs. By considering that $w_{out} \times h_{out} \times \frac{1}{f_1} \times c_{out}$ is relatively small than  $d \times d \times w_{out} \times h_{out} \times c_{in} \times m$ if $f_1$ is not a very small fractional number, the speedup of our model inference can be approximated as
\begin{equation} \label{eq:time_speedup_sbf}
\sim \frac{d \times d \times \times c_{in} \times w_{out} \times h_{out} \times c_{out}}{d \times d \times c_{in} \times w_{out} \times h_{out} \times m} = \frac{1}{f_{2}}.
\end{equation}
Furthermore, due to the binary filters used in our method, convolution operations can be computed using only addition and subtraction (without multiplication) which can further speed up the model inference \cite{rastegari2016xnor}.

\begin{table*}[htbp]
\centering
\caption{Results of different networks on CIFAR-10 and CIFAR-100 datasets using VGG-16 Net}\centering
\begin{tabular}{|c|c|c|c|c}
\hline
Network & Compression Ratio & CIFAR-10 Acc(\%) & CIFAR-100 Acc(\%)   \\
\hline
Full Net (VGG-16) & 1  & \textbf{93.25} & \textbf{73.55}  \\
\hline
LegoNet($f_1 =\frac{1}{4}$ , $f_2= \frac{1}{4}$) & 5.4x  & 91.35 & 70.10 \\
BC   & 31.6x  & 92.11 & 70.64  \\
BWN  & 31.6x & 93.09 & 69.03  \\
BNN  & 31.6x  & 91.21 & 67.88 \\
XNOR-Net  & 31.6x  & 90.02 & 68.63  \\
\hline
{SLBF ($f_1$ = 1, $f_2 = \frac{1}{2}$)} & 60.1x & 91.44 & 68.80 \\
{SLBF ($f_1 = \frac{1}{2}$, $f_2 = \frac{1}{2}$)} & 103.2x & 91.30 & 67.55 \\
{SLBF ($f_1 = \frac{1}{2}$, $f_2 = \frac{1}{4}$)} & 173.1x & 90.24 & 66.68  \\
{SLBF ($f_1 = \frac{1}{2}$, $f_2 = \frac{1}{8}$)} &  261.4x & 89.24 & 62.88 \\
\hline
\end{tabular}\label{tt}
\end{table*}

\section{Experiments}\label{sec:experiments}
In this section, we compare the performance of our proposed method with five state-of-the-art CNN model compression algorithms on two benchmark image classification datasets: CIFAR-10 and CIFAR-100 \citep{krizhevsky2009learning}. 
Note that we focus on the compressing convolution layers as in \citet{yang2019legonet}. The full connection layers can be compressed by adaptive fastfood transform \cite{yang2015deep} which is beyond the scope of this paper. 
We also evaluate the performance of these algorithms on MNIST \citep{lecun1998mnist}  dataset. The results on MNIST dataset can be found in supplementary materials due to page limitation. 

In our experiments, we evaluate the performance of the following seven algorithms:
\begin{itemize}[leftmargin=*]
\item Full Net: deep CNN model with full-precision weights;
\item BinaryConnect(BC): deep CNN model with binary weights  \citep{courbariaux2015binaryconnect};
\item Binarized Neural Networks(BNN): deep CNN model with both binary weights and binary activations \citep{hubara2016binarized};
\item Binary Weight Network(BWN): similar to BC but scaling factors are added to binary filters \citep{rastegari2016xnor};
\item XNOR-Networks(XNOR-Net): similar to BNN but scaling factors are added to binary filters and binary activations \citep{rastegari2016xnor};
\item LegoNet: Efficient CNN with Lego filters \citep{yang2019legonet}
\item Stacking Low-dimensional Binary Filters (SLBF): Our proposed method.
\end{itemize}

\subsection{Experimental Results on CIFAR-10 and CIFAR-100 using VGG-16 Net}

We first present our experiment settings and results on CIFAR-10 and CIFAR-100 datasets by using VGG-16 \citep{simonyan2014very} network as the CNN architecture. CIFAR-10 consists of 50,000 training samples and 10,000 test samples with 10 classes while CIFAR-100 contains more images belonging to 100 classes. Each sample in these two datasets is a $32 \times 32$ colour image. 
The VGG-16 network contains 13 convolution layers and 3 full-connected layers. We use this CNN network architecture for all seven methods. The batch normalization with scaling and shifting applies to all methods too. In our method SLBF, SGD with the momentum of 0.9 is used as the optimizer. 
For other five model compression methods, we use the suggested settings from their papers. 

Our experimental results with different settings of $f_1$ and $f_2$ using VGG-16 are presented in Table \ref{tt}. Note that we only report the result for LegoNet with $f_1=\frac{1}{4}$ and $f_2 =\frac{1}{4}$ because it gets the best trade-off between compression ratio and accuracy based on our experimental results. The VGG-16 with full precision weights gets the highest accuracy $93.25\%$ on CIFAR-10 and $73.55\%$ on CIFAR-100. For CIFAR-10, our method can get $91.30\%$ with model compression ratio 103.2x. This is encouraging since we can compress the full model by more than 100 times without sacrifice classification accuracy too much ($<2\%$). The loss of accuracy with the same compression ratio is larger on CIFAR-100 but the performance is still comparable with other benchmark methods. As expected, the accuracy of our method will decrease when compression ratio increases. However, as can be seen from Table \ref{tt}, the accuracy does not decrease much (i.e., from 91.30\% to 88.63\%) even we increase the compression ratio from 103.2x to 217.32x. It clearly demonstrates our proposed method can achieve a good trade-off between accuracy and model compression ratio. 

\begin{table*}[bth]
\centering
\caption{Results of different networks on CIFAR-10 and CIFAR-100 datasets using ResNet-18 Net}\centering
\begin{tabular}{|c|c|c|c|}
\hline
Network & Compression Ratio & CIFAR-10 Acc(\%) & CIFAR-100 Acc(\%) \\
\hline
Full Net (ResNet-18) & 1  & \textbf{95.19} & \textbf{77.11} \\
\hline
LegoNet($f_1 =\frac{1}{4}$ , $f_2= \frac{1}{4}$) & 17.5x  & 93.55 & 72.67 \\
BC                & 31.8x  & 93.73 & 71.15  \\
BWN              & 31.8x   & 93.97 & 72.92   \\
BNN               & 31.8x   & 90.47 & 70.34  \\
XNOR-Net          & 31.8x   & 90.14 & 72.87  \\
\hline
{SLBF ($f_1$ = 1, $f_2 = \frac{1}{2}$)} & 58.7x & 93.82 &  74.59\\
{SLBF ($f_1 = \frac{1}{2}$, $f_2 = \frac{1}{2}$)} & 95.1x & 93.72 & 74.19  \\
{SLBF ($f_1$ = 1, $f_2 = \frac{1}{4}$)} & 108.2x & 92.96 &  72.12\\
{SLBF ($f_1 = \frac{1}{2}$, $f_2 = \frac{1}{4}$)} &  151.4x & 92.94 & 71.91  \\
{SLBF ($f_1 = \frac{1}{2}$, $f_2 = \frac{1}{8}$)} &  214.9x & 91.70 & 67.89 \\
\hline
\end{tabular}\label{100r}
\end{table*}

\subsection{Experimental Results on CIFAR-10 and CIFAR-100 using ResNet-18}

We also apply the recent ResNet-18 \citep{he2016deep} structure with 17 convolution layers followed by one full-connection layer on CIFAR-10 and CIFAR-100 datasets. Similar to the experimental setting using VGG-16, SGD with the momentum of 0.9 is used as the optimizer in our method. 

The accuracy and compression ratio of benchmark and our method with different settings using ResNet-18 is shown in Table \ref{100r}. Generally the ResNet performs better than VGG-16 network on these two datasets, it can obtain a comparable accuracy of 74.19\% on CIFAR-100 with about 95 times compression when setting $f_1 = \frac{1}{2}$ and $f_2 = \frac{1}{2}$, and the accuracy will not decrease greatly as compression ratio increases to 151 times. In the following subsections, we empirically explore the impact of scaling factors and several other training techniques on our proposed method.


\begin{figure}[bht]\centering
\centering \includegraphics[width=0.85\columnwidth]{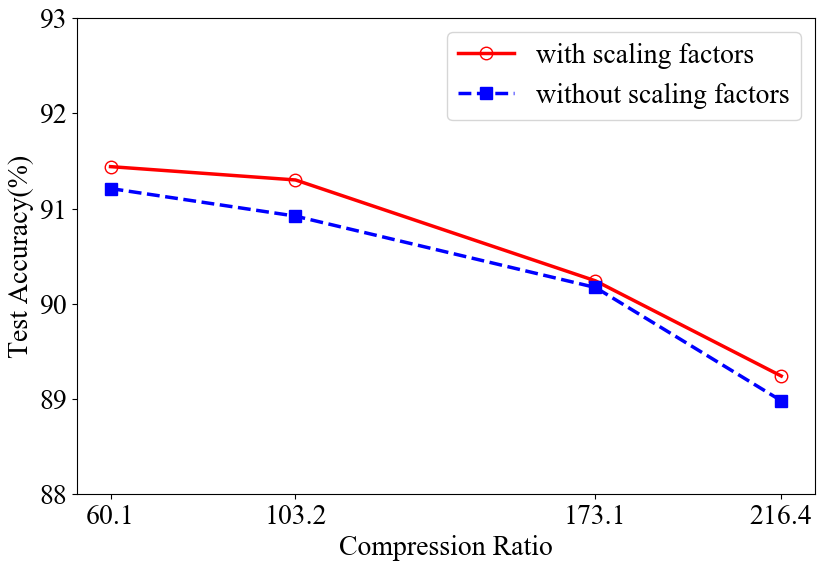}
\caption{Comparison of our method with and without scaling factors}
\label{cifar10}
\end{figure}

\subsection{The Impact of Scaling Factors in Matrix $\mathbf{P}$} 
In our proposed method, the matrix $\mathbf{P}$ used for selecting and stacking binary filters can be defined either as in (\ref{eq:matrixP}) or as in (\ref{eq:matrixP_2}). The difference between these two definitions is that (\ref{eq:matrixP_2}) will multiply binary filters with scaling factors when stacking them together. In here, we evaluate the impact of scaling factors in our method. We compare the accuracy of our method with and without scaling factors on CIFAR-10 datasets using VGG-16 as the compression ratio changing from 60.1x to 217.3x and the results are shown in Figure \ref{cifar10}. As can be seen from Figure \ref{cifar10}, our proposed method with scaling factors always gets slightly higher accuracy than without scaling factors.

\subsection{The Impact of Batch Normalization}
Batch normalization \citep{ioffe2015batch} is a popular technique to improve the training of deep neural networks. It standardizes the inputs to a layer for each mini-batch. We compare the performance of our proposed method with two different batch normalization settings: (1) batch normalization without scaling and shifting: normalize inputs to have zero mean and unit variance; (2) batch normalization with scaling and shifting. The results are reported in Figure \ref{batchnorm} and it shows that batch normalization with scaling obtains better accuracy than without scaling and shifting on CIFAR-10 dataset. Thus we apply these two factors on our methods in the experiments.

We also investigate the impact of other commonly used techniques for deep learning training in our model, such as different optimizer and different batch size. More results and detailed discussion can be found in supplementary materials. 
\begin{figure}[bt]\centering
\includegraphics[width=0.95\columnwidth]{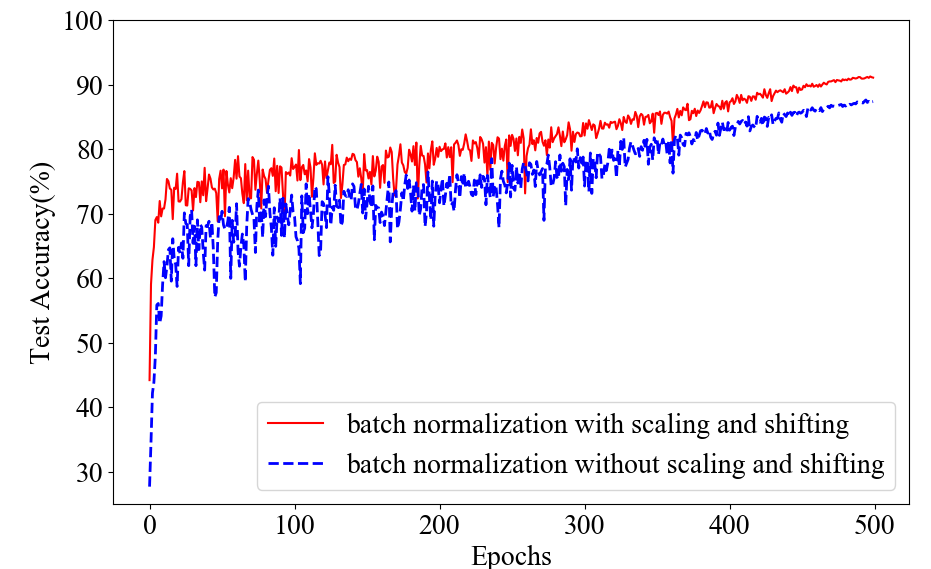}
\caption{Accuracy of batch normalization with/without scaling and shifting ($f_1=\frac{1}{2}, f_2=\frac{1}{2}$).}
\label{batchnorm}
\end{figure}

\section{Conclusions and Future Works}
In this paper, we propose a novel method to compress deep CNN by selecting and stacking low-dimensional binary filters. Our proposed method can overcome the theoretical compression ratio limit of existing binary CNN models. We have theoretically shown that our proposed model is equivalent to select and stack low-dimensional feature maps generated by low-dimensional binary filters and therefore can be efficiently trained using the split-transform-merge strategy. We also provide detailed analysis on the memory and computation cost of our model for model inference. We compare our proposed method with other five popular model compression techniques on three benchmark datasets. Our experimental results clearly demonstrate that our proposed method can achieve comparable accuracy with much higher compression ratio. In our experiments, we also empirically explore the impact of various training techniques on our proposed method.

In the future, we will consider to use binary activation function. By doing it, convolution operations in each layer will be replaced by cheap XNOR and POPCOUNT binary operations which can further speed up model inference as observed in \citep{rastegari2016xnor}. We are also interested in investigating alternative methods to Straight Through Estimator (STE) for learning non-smooth model parameters.

\begin{small}
\bibliography{aaai}
\end{small}

\newpage
\section{Appendix}
\section{Proof of Proposition 1}

\begin{proposition}
Suppose $\mathbf{X}_{input(i)} \in \mathbb{R}^{w_{in} \times h_{in} \times s}$, $\{\mathbf{B}_1,\dots, \mathbf{B}_m\}$ is a set of $m$ low-dimensional binary filters where each $\mathbf{B}_i \in \mathbb{R}^{d \times d \times s}$ and $\mathbf{P}_{(:,i)}^{t}$ is the $i$-th column in $\mathbf{P}^t$ which is a length-$m$ sparse vector with only one non-zero element. Then, $\emph{Conv}(\mathbf{X}_{input(i)}, \sum_{j=1}^{m}\mathbf{P}_{ji}^{t}\mathbf{B}_j)$ is equivalent to $\sum_{j=1}^{m}\mathbf{P}_{ji}^{t}\emph{Conv}(\mathbf{X}_{input(i)}, \mathbf{B}_j)$.
\notag
\end{proposition}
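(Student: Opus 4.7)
The plan is to unfold both sides of the stated equality to their underlying scalar definitions and then exploit the linearity of the discrete convolution operation with respect to its filter argument. I would not actually need the sparsity of $\mathbf{P}_{(:,i)}^{t}$ for the equivalence itself; the identity holds for any real coefficients, and the single-nonzero structure is just what gives the ``select one feature map from $m$ candidates'' interpretation discussed after the proposition.

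First I would write out the left-hand side at an arbitrary output spatial location $(p,q)$ using the standard CNN convolution formula. Writing $\mathbf{W} := \sum_{j=1}^{m}\mathbf{P}_{ji}^{t}\mathbf{B}_j \in \mathbb{R}^{d \times d \times s}$, we have
\begin{equation*}
[\text{Conv}(\mathbf{X}_{input(i)}, \mathbf{W})](p,q) = \sum_{a=1}^{d}\sum_{b=1}^{d}\sum_{c=1}^{s}\mathbf{X}_{input(i)}(p+a-1,\, q+b-1,\, c)\,\mathbf{W}(a,b,c).
\end{equation*}
Substituting $\mathbf{W}(a,b,c) = \sum_{j=1}^{m}\mathbf{P}_{ji}^{t}\mathbf{B}_j(a,b,c)$ and using that each scalar $\mathbf{P}_{ji}^{t}$ does not depend on the spatial/channel index $(a,b,c)$, distributivity of multiplication over addition lets me factor $\mathbf{P}_{ji}^{t}$ past the element-wise product with $\mathbf{X}_{input(i)}$.

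The key step is then an interchange of summation order: because all four sums (over $j$ and over $a,b,c$) are finite, the sum over $j$ can be moved to the outermost position by commutativity and associativity of addition in $\mathbb{R}$. After this swap the inner triple sum is exactly the scalar expansion of $[\text{Conv}(\mathbf{X}_{input(i)}, \mathbf{B}_j)](p,q)$, and the expression becomes $\sum_{j=1}^{m}\mathbf{P}_{ji}^{t}[\text{Conv}(\mathbf{X}_{input(i)}, \mathbf{B}_j)](p,q)$, which is the $(p,q)$ entry of the right-hand side. Since $(p,q)$ was arbitrary, the two output tensors agree pointwise and are therefore equal.

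I do not anticipate any real obstacle here: the substantive content is just the observation that discrete convolution is a linear operator in the filter slot. The only mild care needed is notational hygiene, namely matching whatever padding/stride convention the paper implicitly uses inside $\text{Conv}(\cdot,\cdot)$; since both sides apply the same $\text{Conv}$ operation on the same input patches, the convention cancels out of the argument and the algebraic equivalence holds unchanged.
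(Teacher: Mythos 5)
Your proof is correct and rests on the same core fact as the paper's: $\text{Conv}(\cdot,\cdot)$ is linear in its filter argument. The paper packages this via the im2col vectorization and associativity of matrix multiplication, writing $\widetilde{\mathbf{X}}^{T}(\widetilde{\mathbf{B}}\mathbf{P}_{(:,i)}^{t}) = (\widetilde{\mathbf{X}}^{T}\widetilde{\mathbf{B}})\mathbf{P}_{(:,i)}^{t}$, whereas you expand to scalars and interchange finite sums --- the same argument in different notation --- and your added remark that the sparsity of $\mathbf{P}_{(:,i)}^{t}$ is not needed for the identity (only for the ``select one feature map'' interpretation) is also accurate.
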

\begin{proof}
Let us divide $\mathbf{X}_{input(i)}$ into $p = w_{out} \times h_{out}$ patches and each patch $\mathbf{G}_j$ is with size $\mathbb{R}^{d \times d \times s}$. We can vectorize each patch and form a matrix $\widetilde{\mathbf{X}} = [\text{vec}(\mathbf{G}_1), \text{vec}(\mathbf{G}_2), \dots, \text{vec}(\mathbf{G}_p)] \in \mathbb{R}^{d^2s \times p}$. Similarly, we can vectorize each low-dimensional binary convolution filter $\mathbf{B}_i$ and form a matrix $\widetilde{\mathbf{B}} = [\text{vec}(\mathbf{B}_1), \text{vec}(\mathbf{B}_2), $ $\dots, \text{vec}(\mathbf{B}_m)] \in \mathbb{R}^{d^2s \times m}$. Based on the definition of convolution operation and applying the associative property of matrix multiplication, we have
\begin{equation}\label{eq:reform_conv}
\text{Conv}(\mathbf{X}_{input(i)}, \sum_{j=1}^{m}\mathbf{P}_{ji}^{t}\mathbf{B}_j) =  \widetilde{\mathbf{X}}^{T}(\widetilde{\mathbf{B}}\mathbf{P}_{(:,i)}^{t}) = (\widetilde{\mathbf{X}}^{T}\widetilde{\mathbf{B}})\mathbf{P}_{(:,i)}^{t}.
\end{equation}
Note that $(\widetilde{\mathbf{X}}^{T}\widetilde{\mathbf{B}})\mathbf{P}_{(:,i)}^{t}$ in (\ref{eq:reform_conv}) can be rewritten as $\sum_{j=1}^{m}\mathbf{P}_{ji}^{t}\text{Conv}(\mathbf{X}_{input(i)}, \mathbf{B}_j)$ which can be interpreted as we first perform convolution operations on $\mathbf{X}_{input(i)}$ using  $\{\mathbf{B}_1,\dots, \mathbf{B}_m\}$ to generate $m$ intermediate feature maps $\{\text{Conv}(\mathbf{X}_{input(i)},$ $\mathbf{B}_j)\}_{j=1}^{m}$ and then select one feature map from them using sparse vector $\mathbf{P}^t_{(:,i)}$.
\end{proof}

\section{Additional Experimental Results}

\subsection{Results on MNIST using LeNet-5}

\begin{table}[htp]
\small
\centering
\caption{Results of different networks on MNIST dataset}\centering
\begin{tabular}{|c|c|c|}
\hline
Network & Compression Ratio & Acc(\%) \\
\hline
Full Net (LeNet-5)  & 1 & 99.48 \\
\hline
LegoNet ($f_1 = \frac{1}{4}$, $f_2 = \frac{1}{4}$)  & 15.7  & 99.34  \\
BC       & $\sim32$  & 98.82 \\
BWN      & $\sim32$  & \textbf{99.38}\\
BNN      & $\sim32$  & 98.60 \\
XNOR-Net & $\sim32$ & 99.21  \\
\hline
{SLBF ($f_1 = 1$, $f_2 = \frac{1}{4}$)} & 21.65  & 99.27 \\
{SLBF ($f_1 = 1$, $f_2 = \frac{1}{8}$)} &23.57 & 99.09  \\
{SLBF ($f_1 = 1$, $f_2 = \frac{1}{16}$)} &24.67 & 98.96\\
\hline
\end{tabular}\label{t1}
\end{table}

We present our experiment settings and results on MNIST dataset \citep{lecun1998mnist} in this section. MNIST dataset consists of 60,000 training samples and 10,000 test samples. Each sample is a \(28 \times 28\) pixel grayscale handwritten digital image. The convolutional network architecture we used for MNIST data is the LeNet-5 \citep{lecun1998gradient} which has two convolution layers followed by a MaxPooling layer and two full-connection layers. We use the same CNN architecture for all seven methods. The setting of batch normalization and optimizer is the same as experiments on VGG-16 and ResNet-18.

According to the results in Table \ref{t1}, all methods can obtain $>98\%$ accuracy on this dataset and the difference among them is very small in regard to classification accuracy. The compression ratio for LegoNet is not high since it uses full-precision low-dimensional filters for stacking, thus our methods obtain higher compression ratio compared to LegoNet. However, the compression ratio is still less than binary networks because of the full-precision scaling factors we used on this dataset. As shown in the experiments on CIFAR-10 and CIFAR-100, the compression ratio has been improved greatly when applying deeper network.

\begin{figure}[hb]\centering
\includegraphics[scale=0.33]{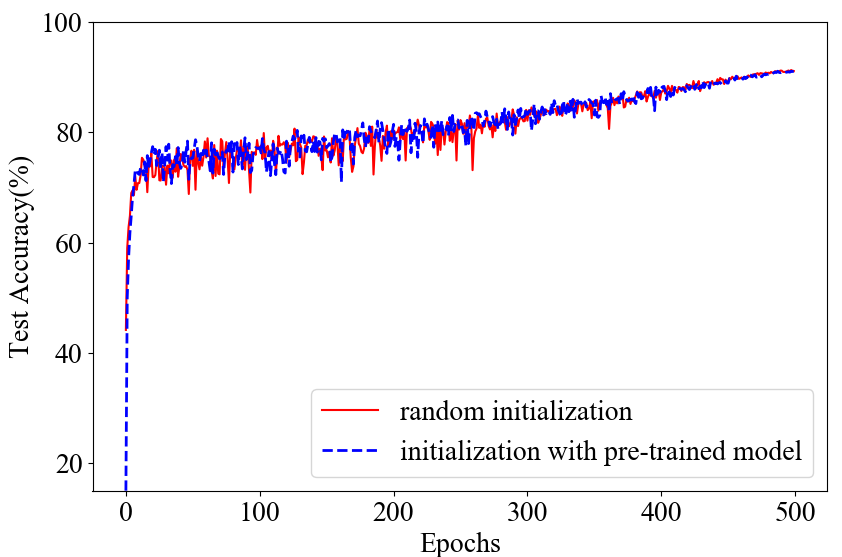}
\caption{Comparison of our method ($f_1=\frac{1}{2}, f_2=\frac{1}{2}$) with different parameter initialization techniques}
\label{initialization}
\end{figure}

\subsection{The Impact of Initialization}
In the step 1 of our training algorithm, we can either randomly initialize the proxy variables $\{\mathbf{R}_1,\dots, \mathbf{R}_m\}$ or initialize them from a pre-trained full-precision LegoNet model. In here, we compare the accuracy of our method with random initialization and initialization with pre-trained model on CIFAR-10 dataset applying VGG-16. As shown in Figure \ref{initialization}, even though initialization from pre-trained model can achieve significant higher accuracy than random initialization in the very beginning, both two initialization methods yield very similar accuracy after a certain number of epochs.

\begin{figure}[ht]\centering
\includegraphics[scale=0.33]{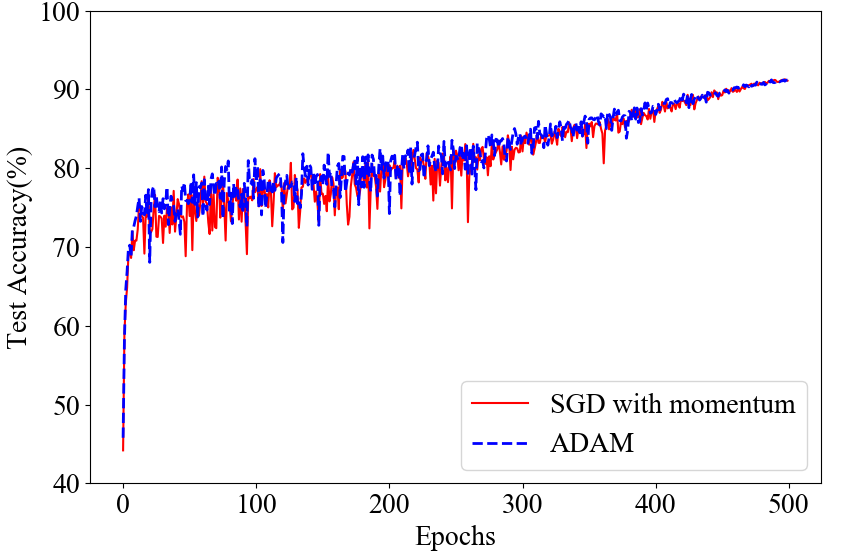}
\caption{Comparison of our method ($f_1=\frac{1}{2}, f_2=\frac{1}{2}$) using different optimizers}
\label{optimizer}
\end{figure}

\begin{figure}[ht]\centering
\includegraphics[scale=0.33]{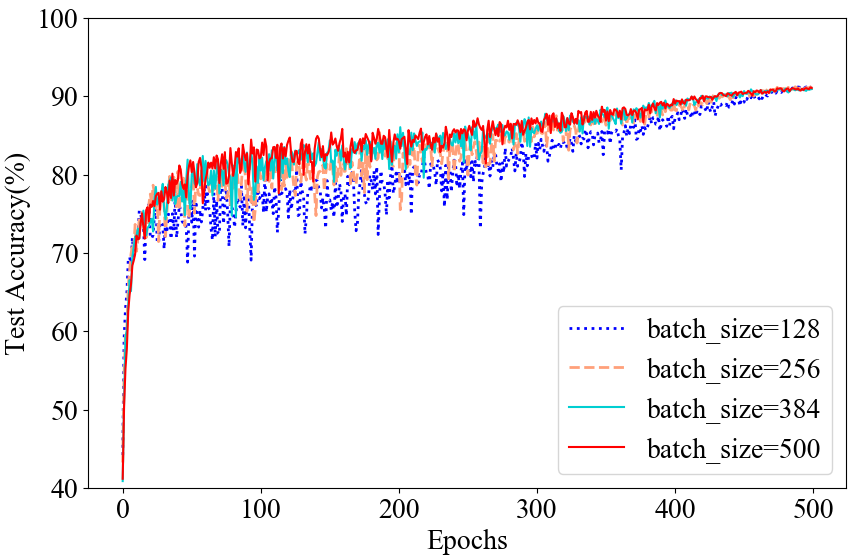}
\caption{Comparison of our method ($f_1=\frac{1}{2}, f_2=\frac{1}{2}$) using different batch\_size}
\label{batchsize}
\end{figure}

\subsection{The Impact of Optimizer}
Our default optimizer is SGD with the momentum of 0.9. We also evaluate the performance of our method using another popular optimizer ADAM\citep{kingma2014adam}. Figure \ref{optimizer} compares the test accuracy of our method using SGD with momentum and ADAM on CIFAR-10 dataset with $f_1 = \frac{1}{2}$ and $f_2 = \frac{1}{2}$. As can be observed from Figure \ref{optimizer}, these two optimizers obtain similar results in the end.

\subsection{The Impact of Batch\_Size}
We also evaluate the impact of batch size in our method. We compare the accuracy with four different settings of batch size. As shown in Figure \ref{batchsize}, we can observe that larger batch size gets better accuracy in the beginning but they reach a similar accuracy in the end.


\end{document}